\documentclass{article}

\usepackage{microtype}
\usepackage{graphicx}
\usepackage{subcaption}
\usepackage{booktabs} 

\usepackage{hyperref}

\usepackage[preprint]{icml2026}

\usepackage{amsmath}
\usepackage{amssymb}
\usepackage{mathtools}
\usepackage{amsthm}

\usepackage[capitalize,noabbrev]{cleveref}

\theoremstyle{plain}
\newtheorem{theorem}{Theorem}[section]

\newtheorem{lemma}[theorem]{Lemma}

\theoremstyle{definition}
\newtheorem{definition}[theorem]{Definition}

\newtheorem{example}[theorem]{Example}
\theoremstyle{remark}
\newtheorem{remark}[theorem]{Remark}

\usepackage[textsize=tiny]{todonotes}

\usepackage{amsmath,amsfonts,bm}

\def\eqref#1{equation~\ref{#1}}

\def\1{\bm{1}}

\DeclareMathAlphabet{\mathsfit}{\encodingdefault}{\sfdefault}{m}{sl}
\SetMathAlphabet{\mathsfit}{bold}{\encodingdefault}{\sfdefault}{bx}{n}

\usepackage{xspace}

\newcommand{\eg}{\textit{e.g.}\@\xspace}

\icmltitlerunning{Deterministic Output in  Charge-Conserving Continuous-Time Neuromorphic Systems}

\begin{document}

\twocolumn[
  \icmltitle{Determinism in the Undetermined: Deterministic Output in  Charge-Conserving Continuous-Time Neuromorphic Systems with Temporal Stochasticity
  }


  \icmlsetsymbol{equal}{*}

  \begin{icmlauthorlist}
    \icmlauthor{Jing Yan}{math}
    \icmlauthor{Kang You}{eiee,sail}
    \icmlauthor{Zhezhi He}{eiee,sail}
    \icmlauthor{Yaoyu Zhang}{math}
  \end{icmlauthorlist}

  \icmlaffiliation{math}{School of Mathematical Sciences, Institute of Natural Sciences and MOE-LSC, Shanghai Jiao Tong University, Shanghai, China}
  \icmlaffiliation{eiee}{School of Electronic Information and Electrical Engineering, Shanghai Jiao Tong University, Shanghai, China}
  \icmlaffiliation{sail}{Shanghai Artificial Intelligence Laboratory, Shanghai, China}

  \icmlcorrespondingauthor{Zhezhi He}{zhezhi.he@sjtu.edu.cn}
  \icmlcorrespondingauthor{Yaoyu Zhang}{zhyy.sjtu@sjtu.edu.cn}

  \icmlkeywords{Machine Learning, ICML}

  \vskip 0.3in
]



\printAffiliationsAndNotice{}  

\begin{abstract}
Achieving deterministic computation results in asynchronous neuromorphic systems remains a fundamental challenge due to the inherent temporal stochasticity of continuous-time hardware. To address this, we develop a unified continuous-time framework for spiking neural networks (SNNs) that couples the Law of Charge Conservation with minimal neuron-level constraints. This integration ensures that the terminal state depends solely on the aggregate input charge, providing a unique cumulated output invariant to temporal stochasticity. We prove that this mapping is strictly invariant to spike timing in acyclic networks, whereas recurrent connectivity can introduce temporal sensitivity. Furthermore, we establish an exact representational correspondence between these charge-conserving SNNs and quantized artificial neural networks, bridging the gap between static deep learning and event-driven dynamics without approximation errors. These results establish a rigorous theoretical basis for designing continuous-time neuromorphic systems that harness the efficiency of asynchronous processing while maintaining algorithmic determinism.
\end{abstract}

\section{Introduction}
Spiking neural networks (SNNs) represent a paradigm shift in computing,
moving away from clocked, synchronous processing toward event-driven,
asynchronous operation typical of biological brains \cite{merolla2014million}.
By processing information through discrete spike events rather than continuous signals,
SNNs offer the potential for massive parallelism and energy efficiency.
However, realizing this potential in physical neuromorphic hardware introduces a fundamental challenge: \emph{temporal stochasticity}.
In realistic continuous-time settings, the precise timing of spike events is inherently variable due to asynchronous input delivery \cite{gallego2020event}, jitter \cite{dally2004principles}, and non-uniform spike propagation delays \cite{li2025deterministic}.
As a result, the same computation may unfold through different spike orders and latencies across runs, potentially changing the final outcome.

This variability raises a critical question for system design:
\textbf{How can we guarantee \emph{algorithmic determinism} in a substrate that is temporally undetermined?}
In standard digital logic, a global clock ensures synchronization. In asynchronous neuromorphic systems, however, small timing variations can propagate through the network and affect the final result.
If the terminal output depends on the relative arrival order of spikes, the system is not reliable for robust engineering. 
Therefore, achieving reliable computation requires identifying structural principles under which \emph{algorithmic determinism} can be realized.

To achieve this, we develop a unified framework that builds on the \textbf{Law of Charge Conservation (LoCC)} \cite{purcell2013electricity,you2025vistream}. We posit that LoCC should not be treated in isolation, but rather as the cornerstone of a broader structural design that explicitly integrates minimal neuron-level constraints. While LoCC ensures that the cumulative charge balance remains invariant to timing, it alone does not guarantee that the resulting computation is unique or even
well-defined. To address this, we introduce two additional neuron-level conditions,
which together ensure a unique and reachable terminal state;
these conditions are formalized in Section~\ref{sec:charge-conserving-snn}.
By treating physical conservation and these decoding constraints as an integral whole, we establish a robust mechanism where algorithmic determinism emerges naturally from continuous-time dynamics, effectively filtering out the noise of temporal stochasticity.

This structural determinism bridges the gap between dynamic spiking systems and static deep learning models. By ensuring that the terminal state is time-independent, we establish an exact representational correspondence between our charge-conserving SNNs and quantized artificial neural networks (QANNs) . 
This correspondence implies that SNNs can approach the inference accuracy of a mature ecosystem of QANNs in the continuous-time domain, while being deployed on event-driven chips without synchronization.

In summary, this work makes three main contributions:
\begin{itemize}
    \item \textbf{A unified continuous-time framework for charge-conserving SNNs.} We formulate a model-independent LoCC for general continuous-time spiking systems and identify the minimal neuron-level design conditions required to couple charge conservation with unique steady-state decoding.
    
    \item \textbf{Theoretical guarantee for deterministic output under temporal stochasticity.} We rigorously prove that SNNs satisfying our framework with acyclic connectivity admit a unique terminal output. We demonstrate that this output depends solely on the aggregate injected charge and is strictly invariant to the temporal realization of spike events, including arbitrary delays and asynchronous processing.
    \item \textbf{Exact representational correspondence between charge-conserving SNNs and QANNs.} We establish a rigorous bidirectional mapping between the steady-state output of acyclic SNNs and acyclic QANNs. We demonstrate that charge-conserving SNNs naturally realize static QANN functions at equilibrium, and conversely, that quantized QANNs can be implemented by event-driven SNNs without approximation errors.
\end{itemize}

\section{Related Works}

Our work relates to several lines of research on SNNs, continuous-time neural dynamics, and conservation- or constraint-based views of neural computation.

\paragraph{Asynchronous and event-driven spiking computation.}
SNNs are inherently event-driven and asynchronous, a property that underlies both their biological plausibility and their appeal for energy-efficient neuromorphic hardware \cite{RN271,boahen2017neuromorph,davies2018loihi,RN285}.
In such systems, spike timing variability arises naturally from non-uniform spike propagation delays and non-uniform propagation delays.
A substantial body of work has investigated how this variability affects learning and inference, often from an empirical or algorithmic robustness perspective, including deep SNN surveys and training methodologies \cite{RN272,RN273,RN279,RN280,diehl2015unsupervised}.
These approaches typically aim to mitigate or average out timing variability, but do not provide structural guarantees that the terminal output is invariant to the specific realization of spike
times.

\paragraph{Continuous-time neuron models and terminal output.}
Classical spiking neuron models, including integrate-and-fire, exponential integrate-and-fire, and related formulations, provide mechanistic rules for spike generation and admit continuous-time descriptions of neuronal dynamics \cite{RN271,RN276,brette2007exact}.
While these models capture rich transient behaviors, their terminal outputs are usually tied to specific reset rules, biophysical assumptions, or discretization schemes \cite{GerstnerKistler2002,DayanAbbott2005,Izhikevich2007}.
Therefore, existing analysis do not yield model-independent criteria for when terminal output is deterministic and insensitive to temporal stochasticity in spike timing.

\paragraph{Conservation- and constraint-based perspectives.}
Several works have explored conservation laws, bookkeeping principles, or constraint-based formulations as a means to understand neural computation from a global perspective.
Early examples include energy-based and attractor-style models, \eg, Hopfield networks, where computation is characterized by convergence to a terminal state determined by global constraints \cite{hopfield1982neural}.
Related constraint-based viewpoints also appear in energy-based learning frameworks (outside the spiking setting) \cite{RN278}.
In the neuromorphic and spiking context, conservation-inspired perspectives have been discussed as part of the broader effort to build interpretable and efficient event-driven systems \cite{boahen2017neuromorph,RN285}.
In particular, prior work introduced a discrete-time formulation of the Law of Charge Conservation (LoCC) for a specific spiking mechanism and demonstrated timing-invariant behavior in feedforward networks under that setting \cite{you2025vistream}.
The present work generalizes this line of research by developing a continuous-time formulation of LoCC that does not rely on a specific spiking mechanism, and by identifying minimal neuron-level and topological conditions under which conservation suffices to guarantee deterministic terminal computation.

\paragraph{QANN-SNN correspondence and terminal output computation.}
A closely related line of research studies the relationship between SNNs and ANNs, including ANN-to-SNN conversion, rate-based interpretations, and temporal or latency-based coding schemes \cite{diehl2015fast,RN289,RN291,RN292}.
These approaches often establish approximate correspondences that hold under long time horizons, specific coding assumptions, or bounded quantization errors, and are primarily concerned with matching firing rates or decision accuracy.
In contrast, our analysis yields an \emph{exact} correspondence between continuous-time spiking dynamics and static QANN computations at the terminal output, without relying on precise spike timing, rate averaging, or approximation arguments.

\paragraph{Network topology and determinism.}
The influence of network topology on neural dynamics has been widely studied,
particularly for recurrent architectures where complex and potentially non-convergent dynamical behaviors may arise \cite{RN294,RN282,RN293,RN283}.
Most existing analyses focus on expressivity, stability, or learning dynamics, rather than on topology-based guarantees of timing-insensitive terminal computation.
Our results offer a complementary perspective: deterministic terminal
computation is guaranteed for acyclic network topologies, even under
temporal variability.
By contrast, recurrent topologies can make the computation sensitive
to spike timing.

\section{Charge-Conserving Continuous-Time SNNs}


\label{sec:charge-conserving-snn}

Our primary motivation is to engineer algorithmic determinism into asynchronous systems that are inherently prone to temporal stochasticity. In standard spiking neuron models, the terminal output is often path-dependent, emerging as a byproduct of transient voltage dynamics that are sensitive to variable delays and jitter. To achieve robust determinism, we introduce a fundamental design shift: \textbf{moving from state-dependent dynamics to conservation-based bookkeeping}. 
The basic idea is to impose LoCC as a rigorous algorithmic constraint. By enforcing that every unit of aggregate input charge is strictly accounted for—regardless of the precise moment it is integrated or discharged—we structurally decouple the computational result from its temporal realization. This ensures that the system’s terminal output is determined solely by the total information processed, rendering it strictly invariant to the stochastic timing details of the continuous-time substrate.


\begin{figure}[!t]
  \centering
  \includegraphics[width=0.9\linewidth]{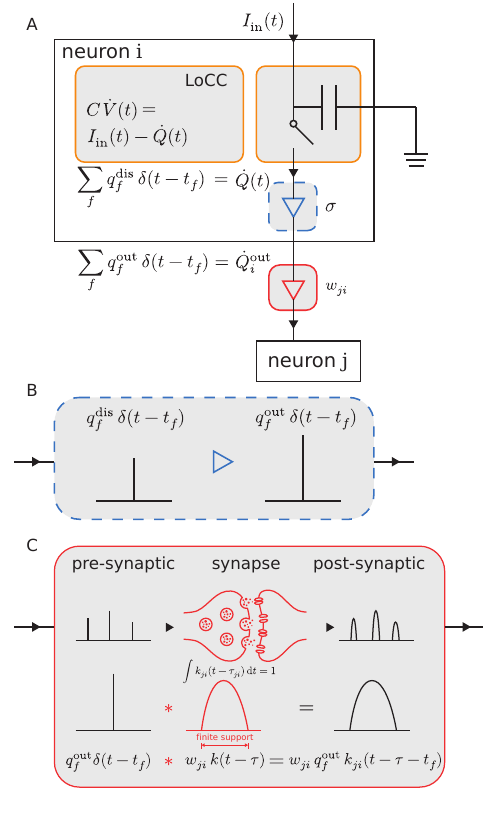}
  \caption{
  Charge-based abstraction of continuous-time spiking neurons.
   (\textbf{A}) Neuron-level dynamics.
  (\textbf{B}) Network interface.
    (\textbf{C}) Synaptic transmission.
  }
  \label{fig:diagram1}
\end{figure}

\subsection{Single Neuron Dynamics}
\label{subsec:neuron-level}

\cref{fig:diagram1}A illustrates neural dynamics are described in terms of membrane charge integration, spike-triggered discharge, and spike-wise output encoding.
This abstraction captures a broad class of continuous-time neuron models while remaining agnostic to the specific spike-generation mechanism.

\paragraph{Charge-conserving neuronal dynamics.}
Each neuron is characterized by a membrane potential $V(t)\in\mathbb{R}$ that
stores electric charge and by internal states governing spike generation.
Membrane dynamics can be expressed in charge form as
\begin{equation}
C\,\dot V(t) = I_{\mathrm{in}}(t) - \dot Q(t),
\label{eq:neuron-charge-dynamics}
\end{equation}
where $C$ denotes the membrane capacitance, $I_{\mathrm{in}}(t)$ is the total
input current, and $Q(t)$ is the cumulative discharged membrane charge induced by spikes. Equation~(\ref{eq:neuron-charge-dynamics}) enforces the conservation law of membrane charge, namely, \textbf{the law of charge conservation (LoCC)} (See Lemma~\ref{lem:single-neuron-locc} for details). It dictates that all input charge, if not converted to spikes with discharge, remains in the membrane potential.
Spike events occur at times $\{t_f\}$ determined by the neuron’s internal event
mechanism and induce instantaneous discharge increments, represented as
\begin{equation}
\dot Q(t)
=
\sum_f q_f^{\mathrm{dis}}\,\delta(t-t_f).
\end{equation}
Without loss of generality, we use zero initial conditions $V(0)=0$ and
$Q(0)=0$ by default throughout this work. Spike times may be subject to neuron-level timing constraints, such as refractory
periods or internal clocks.
When such constraints are present, we allow them to be stochastic.
Technical conventions for hybrid dynamics, including right-continuity and
measure-theoretic representations of spike-triggered discharge, are deferred to
Appendix~\ref{app:measure}.



\paragraph{Additional neuron-level design conditions.}
Charge conservation alone does not guarantee deterministic terminal behavior.
To obtain a well-defined, time-independent mapping from aggregate input charge to
a unique terminal state, the neuron model must satisfy two additional
\emph{design conditions}.

\emph{(i) Decoding structure at silence}
\label{cond:decoding-structure}.
For each admissible discharge level $Q\in\mathcal Q$, let
$\mathcal V^*(Q)\subset\mathbb R$ denote the associated silent set.
We require that the corresponding input-charge slices
\begin{equation}
\Omega_Q := \{\, Q + C V^* : V^* \in \mathcal V^*(Q) \,\}
\label{eq:input-charge-slice}
\end{equation}
form a partition of the real line.
This condition ensures that any total injected charge $z$ can be associated
with a discharge level $Q^*$ at a silent state.
Accordingly, this defines a decoding map
\[
\mathcal D:\mathbb R\to\mathcal Q,
\qquad
z\mapsto Q^* \;\;\text{such that}\;\; z\in\Omega_{Q^*}.
\]

\emph{(ii) Strict progress toward silence}
\label{cond:strict-progress}.
Define the distance-to-silence function
\[
\Phi(V,Q) := \mathrm{dist}\bigl(V,\mathcal V^*(Q)\bigr).
\]
The neuron is designed to satisfy a strict progress condition:
there exists $\delta>0$ such that, at each spike time $t_f$,
either the post-spike state is silent,
$V(t_f)\in\mathcal V^*(Q(t_f))$,
or
\begin{equation}
\Phi\bigl(V(t_f),Q(t_f)\bigr)
\;\le\;
\Phi\bigl(V(t_f^-),Q(t_f)\bigr)
-
\delta .
\label{eq:strict-progress}
\end{equation}
This condition enforces event-wise progress at spike times.
In particular, once external input ceases, spiking activity cannot persist
indefinitely.

Together, above conditions  ensure that every admissible input episode
converges to a silent state corresponding to a uniquely decoded terminal output (See Lemma~\ref{lem:single-neuron-locc} for details).
Formal definitions and technical elaborations are deferred to
Appendix~\ref{app:silence} and Appendix~\ref{app:finite-spiking}.

In addition to updating the internal discharge state, each spike-induced
discharge is re-encoded into an externally visible output charge via an
activation map $\sigma$,
\[
Q^{\mathrm{out}}(t) := \sigma\!\bigl(Q(t)\bigr).
\]
Accordingly, a spike at time $t_f$ with discharge $q_f^{\mathrm{dis}}$ produces
the output increment
\[
q_f^{\mathrm{out}}
=
\sigma\!\bigl(Q(t_f^-)+q_f^{\mathrm{dis}}\bigr)
-
\sigma\!\bigl(Q(t_f^-)\bigr).
\]
The externally visible output charge admits the representation
\[
Q^{\mathrm{out}}(t)
=
\sigma\!\bigl(Q(0)\bigr)
+
\sum_f q_f^{\mathrm{out}}\, H(t-t_f),
\]
where $H(\cdot)$ denotes the Heaviside step  function.
In particular, if $\sigma(0)\neq 0$, the initial discharge state contributes
a nonzero output offset prior to any spike event.

\subsection{Network Coupling via Charge-Preserving Transmission}
\label{subsec:network-coupling}
At the network level (Figure~1B--C), neurons interact only through spike-wise
output charge.

\paragraph{Spike-wise transmission interface.}
Neuron $j$ exposes the event stream $\{(t^j_f, q^{\mathrm{out}}_{j,f})\}$, where
$q^{\mathrm{out}}_{j,f}$ is the activation jump induced by the $f$-th discharge event;
all other discharge dynamics (e.g., $q^{\mathrm{dis}}_{j,f}$ and the evolution of $Q_j$ and $V_j$)
remain internal.

\paragraph{Charge-normalized synaptic filtering and bounded delays.}
When neuron $j$ emits a spike at time $t_f^j$, the associated output charge
$q_{j,f}^{\mathrm{out}}$ is transmitted to a postsynaptic neuron $i$ through
synaptic filtering and propagation delay.
Specifically, the spike contributes an atomic charge impulse that is convolved
with a synaptic kernel $k_{ij}(\cdot)$ and delayed by $\tau_{ij}\ge 0$.
The resulting synaptic input current to neuron $i$ is given by
\begin{equation}
{\small
\begin{aligned}
I_i^{\mathrm{syn}}(t)
=
\sum_{j=1}^N W_{ij}
\left(
\sum_f
q_{j,f}^{\mathrm{out}}\,
k_{ij}\!\left(t - t_f^j - \tau_{ij}\right)
+
q_0\,k_{ij}\!\left(t-\tau_{ij}\right)
\right),
\end{aligned}
}
\label{eq:network-synaptic-current}
\end{equation}
where $W_{ij}$ denotes the synaptic coupling weights and
$q_0:=\sigma(0)$ accounts for a possible baseline output charge.

The synaptic kernel is assumed to have finite support and to be
\emph{charge-normalized}, satisfying
\begin{equation}
\int_0^{\infty} k_{ij}(t)\,\mathrm{d}t = 1.
\label{eq:kernel-charge-normalization}
\end{equation}
Consequently, each spike transmits a total charge exactly equal to
$q_{j,f}^{\mathrm{out}}$, independent of the temporal shape of the kernel.

Propagation delays $\tau_{ij}$ model variability in synaptic transmission and
signal propagation, and may be stochastic.
Importantly, synaptic filtering and delays affect only the temporal realization
of spike interactions; the total transmitted charge remains invariant under LoCC.

\paragraph{External episode charge.}
Let $I^{\mathrm{ext}}_i(t)$ denote the external input current to neuron $i$ on $[0,T]$, and set
\[
u_i := \int_0^T I^{\mathrm{ext}}_i(t)\,\mathrm dt,\qquad u=(u_i)_{i=1}^n.
\]
We use $u$ to parameterize input episodes independently of spike timing.

\section{Deterministic Output under Temporal Stochasticity}
\label{sec:output-determinism}

In this section, we study how temporal stochasticity at the level of spike
generation, processing latency, and signal transmission affects the terminal output of spiking neural networks.
Rather than treating timing variability as a nuisance to be eliminated, we
ask under what structural conditions such variability becomes irrelevant to
the terminal computation.



We proceed in two steps.
We first establish determinism at the level of a single neuron, showing that
charge conservation together with additional neuron-level design conditions
induces a time-independent input--output map.
We then lift these results to the network level and show that acyclicity
suffices to collapse temporal variability into a deterministic terminal
outcome.

\subsection{Single-Neuron Determinism under Charge Conservation}
\label{subsec:single-neuron-determinism}

We first establish determinism at the level of a single spiking neuron.
Throughout this subsection, we consider a neuron whose dynamics satisfy the
charge-based formulation and neuron-level design conditions introduced in
Section~\ref{sec:charge-conserving-snn}.

\medskip

\begin{lemma}[LoCC for a Single Neuron]
\label{lem:single-neuron-locc}
Let $V(t)$ and $Q(t)$ evolve according to the charge-based membrane dynamics
Equation~(\ref{eq:neuron-charge-dynamics}), with spike-triggered discharge represented as an
atomic measure.
Then, for any $T \ge 0$, the membrane charge satisfies the identity
\begin{equation}
C\bigl(V(T)-V(0)\bigr)
=
Q^{\mathrm{in}}(0,T) - \bigl(Q(T)-Q(0)\bigr),
\label{eq:locc-single-neuron}
\end{equation}
where
\[
Q^{\mathrm{in}}(0,T) := \int_0^T I_{\mathrm{in}}(t)\,\mathrm{d}t
\]
denotes the cumulative injected input charge.
\end{lemma}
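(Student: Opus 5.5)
The identity follows by integrating Equation~(\ref{eq:neuron-charge-dynamics}) over $[0,T]$. Because $V$ and $Q$ jump at spike times, this integration has to be carried out in the measure-theoretic sense of Appendix~\ref{app:measure}, not as a classical fundamental-theorem-of-calculus step. I will regard $V$ and $Q$ as right-continuous functions of bounded variation on $[0,T]$. Their distributional derivatives are then signed Radon measures $\mathrm{d}V$ and $\mathrm{d}Q$. With this reading, Equation~(\ref{eq:neuron-charge-dynamics}) is the measure identity
\[
C\,\mathrm{d}V = I_{\mathrm{in}}(t)\,\mathrm{d}t - \mathrm{d}Q,
\qquad
\mathrm{d}Q = \sum_f q_f^{\mathrm{dis}}\,\delta_{t_f}.
\]
Here $I_{\mathrm{in}}$ is assumed locally integrable, and only finitely many spikes fall in $[0,T]$, so $\mathrm{d}Q$ is a finite atomic measure.

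\textbf{Key steps.} First, for a right-continuous BV function $F$, the measure $\mathrm{d}F$ assigns mass $F(T)-F(0)$ to the interval $(0,T]$. Hence $\int_{(0,T]}\mathrm{d}V = V(T)-V(0)$ and $\int_{(0,T]}\mathrm{d}Q = Q(T)-Q(0) = \sum_{t_f\in(0,T]} q_f^{\mathrm{dis}}$. Second, integrate both sides of the measure identity over $(0,T]$. The absolutely continuous term gives $\int_0^T I_{\mathrm{in}}\,\mathrm{d}t = Q^{\mathrm{in}}(0,T)$, because Lebesgue measure does not see the endpoint. Putting these together yields Equation~(\ref{eq:locc-single-neuron}). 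The case $T=0$ holds trivially, since both sides vanish.

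As an elementary cross-check, and an alternative route, I can order the spike times $0<t_1<\dots<t_m\le T$. On each open inter-spike interval $V$ is absolutely continuous, so $C\,\Delta V = \int I_{\mathrm{in}}$ there. At each $t_f$ the delta mass gives the jump $C\bigl(V(t_f)-V(t_f^-)\bigr) = -q_f^{\mathrm{dis}}$. Telescoping these pieces gives the same identity.

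\textbf{Main obstacle.} The only delicate point is the endpoint convention. One must make sure that a spike exactly at $t=0$ or $t=T$ is counted consistently on both sides. Right-continuity, together with integrating over $(0,T]$, handles this: a spike at $T$ is included in both $V(T)$ and $Q(T)$, and the initial values $V(0)$ and $Q(0)$ already absorb any atom at $0$. I will state this convention explicitly and note that the finiteness of spikes on bounded intervals is what guarantees $\mathrm{d}Q$ is a finite measure.
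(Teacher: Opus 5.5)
Your proposal is correct and is essentially the paper's proof: the paper simply integrates $C\,\dot V = I_{\mathrm{in}} - \dot Q$ over $[0,T]$, and you make that one-line step rigorous by reading the equation as an identity of Lebesgue--Stieltjes measures for right-continuous functions and integrating over $(0,T]$, which is the c\`adl\`ag convention fixed in Appendix~\ref{app:measure} but left implicit in the paper's proof. Your telescoping cross-check is also sound, and the same argument covers the impulsive inputs allowed in Section~\ref{subsec:neuron-realization} if you replace $I_{\mathrm{in}}\,\mathrm{d}t$ by a measure and apply the same $(0,T]$ endpoint convention to it.
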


Lemma~\ref{lem:single-neuron-locc} expresses an exact bookkeeping identity at the
level of membrane charge: the change in stored charge equals the difference
between total injected charge and total spike-triggered discharge.
Importantly, this relation holds independently of the spike-generation
mechanism, spike timing, or internal state dynamics. A detailed proof is provided in Appendix~\ref{app:proof-locc-single}.

\medskip

\begin{lemma}[Uniqueness of discharge at silence under the decoding structure]
\label{lem:unique-silent-discharge}
Consider a neuron satisfying the decoding structure specified in
Section~\ref{sec:charge-conserving-snn}.
Then the aggregate injected charge uniquely determines the discharge level associated with a silent state.
\end{lemma}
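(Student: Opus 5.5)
The plan is to combine the bookkeeping identity of Lemma~\ref{lem:single-neuron-locc} with the partition property of the decoding structure. The aim is to show that the set of discharge levels compatible with a silent terminal state, given total injected charge $z$, consists of exactly one element, namely $\mathcal D(z)$.

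\textbf{Step 1 (rewrite the terminal state as an input-charge slice).} Fix an input episode with cumulative injected charge $z := Q^{\mathrm{in}}(0,T)$, and use the default initial conditions $V(0)=0$, $Q(0)=0$. Suppose that at time $T$ the neuron is in a silent state $(V^*,Q^*)$, meaning $Q^*\in\mathcal Q$ and $V^*\in\mathcal V^*(Q^*)$. Lemma~\ref{lem:single-neuron-locc} then gives
\[
C V^* = z - Q^*, \qquad\text{equivalently}\qquad z = Q^* + C V^* .
\]
By the definition of the slice in Equation~(\ref{eq:input-charge-slice}), this says exactly that $z\in\Omega_{Q^*}$.

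\textbf{Step 2 (uniqueness from the partition).} Suppose two silent terminal states, say $(V_1^*,Q_1^*)$ and $(V_2^*,Q_2^*)$, arise from the same aggregate charge $z$. This covers different timing realizations of the same episode as well as different episodes with equal total charge. By Step~1, $z\in\Omega_{Q_1^*}\cap\Omega_{Q_2^*}$. The slices $\{\Omega_Q\}_{Q\in\mathcal Q}$ partition $\mathbb R$, so they are pairwise disjoint whenever the indices differ. Hence $Q_1^*=Q_2^*$. The partition also covers $\mathbb R$, so every $z$ lies in some slice, and the common value is precisely $\mathcal D(z)$. The membrane potential is then forced as well, since $V^* = (z-\mathcal D(z))/C$. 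This shows the whole silent state, and not only the discharge level, is determined by $z$.

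\textbf{Main obstacle.} The argument itself is short. The delicate point is being precise about what "partition" means when two different discharge levels $Q\neq Q'$ might produce the same set $\Omega_Q=\Omega_{Q'}$. I will read the partition condition as a statement about the \emph{indexed} family: the map $Q\mapsto\Omega_Q$ is injective onto a family of pairwise disjoint sets covering $\mathbb R$, with each $\Omega_Q$ nonempty. This is exactly what makes $\mathcal D$ well defined, and I will state it explicitly.

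A second point to handle is that Lemma~\ref{lem:single-neuron-locc} uses right-continuous conventions at spike times (Appendix~\ref{app:measure}). I will evaluate the identity at a time $T$ after the last spike, where $V$ and $Q$ are constant, so that no boundary atom of the discharge measure is ambiguous. Existence of such a silent time is not part of this lemma; it is supplied separately by the strict-progress condition. I will therefore prove the statement conditionally: \emph{if} a silent state is reached, its discharge level equals $\mathcal D(z)$.
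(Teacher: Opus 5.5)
Your proposal is correct and matches the paper's proof essentially step for step: LoCC with zero initial conditions gives $z = Q^\ast + C V^\ast$, silence places $z$ in $\Omega_{Q^\ast}$, and disjointness of the slices forces $Q^\ast = \mathcal D(z)$. The ``main obstacle'' you raise is already handled by the paper's Definition~\ref{def:partition}, which imposes disjointness on the indexed family ($\Omega_Q\cap\Omega_{Q'}=\varnothing$ whenever $Q\neq Q'$), so your extra nonemptiness assumption is not needed.
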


Under the decoding structure at silence, the aggregate injected charge
uniquely determines the discharge level associated with a silent state,
independent of spike timing. A detailed proof is provided in Appendix~\ref{app:proof-unique-silent-discharge}.

\medskip

\begin{lemma}[Finite spiking under strict progress toward silence]
\label{lem:finite-spiking}
Consider a single neuron satisfying the design conditions of
Section~\ref{sec:charge-conserving-snn}, in particular
Equation~(\ref{eq:strict-progress}).
Assume that $I_{\mathrm{in}}(t)=0$ for all $t\ge T_{\mathrm{in}}$.
Then the neuron can emit only finitely many spikes for $t\ge T_{\mathrm{in}}$.
\end{lemma}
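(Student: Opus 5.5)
The plan is a potential-function argument with $\Phi(V,Q)=\mathrm{dist}(V,\mathcal V^*(Q))$ as a Lyapunov-type quantity that can only decrease after $T_{\mathrm{in}}$. Since $I_{\mathrm{in}}(t)=0$ for $t\ge T_{\mathrm{in}}$, Equation~(\ref{eq:neuron-charge-dynamics}) gives $C\dot V(t)=-\dot Q(t)$ there. Between spikes $\dot Q=0$, so both $V$ and $Q$ are constant on every inter-spike interval in $[T_{\mathrm{in}},\infty)$. Hence $\Phi$ is piecewise constant after $T_{\mathrm{in}}$ and can change only at spike times $t_f$. The right-continuity conventions of Appendix~\ref{app:measure} identify the value on each interval with the post-spike value at its left endpoint.

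The key step is to compare $\Phi$ just before and just after a spike. At a spike time $t_f\ge T_{\mathrm{in}}$, the state just before the spike is $(V(t_f^-),Q(t_f^-))$, which also holds on the preceding interval. The strict progress condition (\ref{eq:strict-progress}) is stated as a comparison of $\Phi(V(t_f),Q(t_f))$ with $\Phi(V(t_f^-),Q(t_f))$, both evaluated at the \emph{post-spike} discharge level. To chain these into a telescoping decrease, I need $\Phi(V(t_f^-),Q(t_f))$ to be controlled by the previous post-spike value $\Phi(V(t_{f-1}),Q(t_{f-1}))=\Phi(V(t_f^-),Q(t_f^-))$. This is the main obstacle. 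I would handle it by reading the strict progress condition, as formalized in Appendix~\ref{app:finite-spiking}, as a decrease of $\Phi$ along the actual trajectory, i.e. $\Phi$ after the spike is at most $\Phi$ before the spike minus $\delta$. If a literal reading is needed instead, I would use the fact that after $T_{\mathrm{in}}$ the conserved quantity $z=CV+Q$ is fixed by Lemma~\ref{lem:single-neuron-locc}. Under the partition property, $\Phi$ can then be re-expressed as a function of $(z,Q)$ that measures the distance from $z$ to the slice $\Omega_Q$. This makes the pre-spike comparison well defined.

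Granting the per-spike decrease, let $t_1<t_2<\dots$ be the spike times in $[T_{\mathrm{in}},\infty)$, and suppose none of them produces a silent post-spike state. Then
\[
0\le \Phi\bigl(V(t_n),Q(t_n)\bigr)\le \Phi\bigl(V(T_{\mathrm{in}}),Q(T_{\mathrm{in}})\bigr)-n\delta .
\]
So $n\le \Phi_0/\delta$, where $\Phi_0 := \Phi(V(T_{\mathrm{in}}),Q(T_{\mathrm{in}}))<\infty$, since $V$ is finite at every time. If instead some spike lands in a silent state $V(t_f)\in\mathcal V^*(Q(t_f))$, I would invoke the definition of the silent set from Appendix~\ref{app:silence}: a silent state with zero input emits no further spikes, so the spike count stops there. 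In either case the number of spikes after $T_{\mathrm{in}}$ is at most $\lfloor \Phi_0/\delta\rfloor+1$, which is finite.

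A final check rules out accumulation of infinitely many spikes in a bounded window, which would evade the indexing above. The same bound applies to any finite collection of spikes after $T_{\mathrm{in}}$, regardless of their timing. Therefore the total number of spikes after $T_{\mathrm{in}}$ is finite, independently of refractory periods, internal clocks, or their stochasticity.
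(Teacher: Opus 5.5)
Your argument is correct and is essentially the paper's proof. You read strict progress trajectory-wise as $\Phi(t_f)\le\Phi(t_f^-)-\delta$, which is the form in Appendix~\ref{app:finite-spiking} and exactly what the paper uses; you then telescope using that $\Phi$ is frozen between spikes once the input has stopped, and halt at a silent post-spike state. Drop your fallback for the literal main-text reading, though. The identity $\Phi=\tfrac1C\,\mathrm{dist}(z,\Omega_Q)$ is right, but the literal inequality only compares distances to the single slice $\Omega_{Q(t_f)}$ and gives no control across the level change. In fact it is too weak for the lemma: take $C=1$, $z=0$, $\mathcal Q=\{0,1,2\}$, $\Omega_1=(-\infty,-\tfrac12)$, $\Omega_2=[-\tfrac12,\tfrac12)$, $\Omega_0=[\tfrac12,\infty)$. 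A neuron toggling $Q$ between $0$ and $1$ forever satisfies the literal condition with $\delta=1$ while $\Phi$ stays at $\tfrac12$. So the trajectory-wise reading is necessary, not merely convenient.
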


This lemma formalizes the role of the strict progress condition as a neuron-level design principle ensuring termination of spiking activity.
In particular, after the input episode ends, the neuron emits only finitely many spikes, although this number need not admit a uniform deterministic upper bound.
Temporal variability in spike timing or refractory effects may alter the transient trajectory, but cannot prevent convergence to silence.
A detailed proof is provided in Appendix~\ref{app:proof-finite-spiking}.

\medskip

\begin{theorem}[Time-Independent Input--Output Map of a Single Neuron]
\label{thm:single-neuron-determinism}
Consider a single neuron designed according to
Section~\ref{sec:charge-conserving-snn}, satisfying the charge conservation
principle, the decoding structure at silence, and the strict progress condition,
with reference initial conditions $V(0)=0$ and $Q(0)=0$.

Assume that the external input episode terminates at some finite time
$T_{\mathrm{in}}$, so that $I_{\mathrm{in}}(t)=0$ for all $t\ge T_{\mathrm{in}}$.
Then there exists a finite time $T_{\mathrm{term}}$ such that the externally
visible output
\[
Q^{\mathrm{out}}(t)=\sigma\!\bigl(Q(t)\bigr)
\]
is constant for all $t\ge T_{\mathrm{term}}$.
Moreover, the terminal output is given by the composition
\[
Q^{\mathrm{out}}_{\mathrm{term}}
=
(\sigma\circ\mathcal D)\!\left(Q^{\mathrm{in}}(0,T_{\mathrm{in}})\right),
\]
where $\mathcal D$ denotes the decoding map induced by the silent-set partition
Equation~(\ref{eq:input-charge-slice}).
\end{theorem}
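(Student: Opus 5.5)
The plan is to combine the three preceding lemmas in sequence: Lemma~\ref{lem:finite-spiking} gives termination of spiking, Lemma~\ref{lem:single-neuron-locc} gives the charge bookkeeping at the terminal time, and Lemma~\ref{lem:unique-silent-discharge} identifies the terminal discharge level through the decoding map $\mathcal D$.

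\textbf{Step 1: existence of a last spike.} Since $I_{\mathrm{in}}(t)=0$ for $t\ge T_{\mathrm{in}}$, Lemma~\ref{lem:finite-spiking} shows that only finitely many spikes occur after $T_{\mathrm{in}}$. I also need finitely many spikes on $[0,T_{\mathrm{in}}]$. I expect this to follow from the same progress argument or from the admissibility conventions of Appendix~\ref{app:finite-spiking}, such as refractoriness or local finiteness of the atomic discharge measure. Let $T_{\mathrm{term}}$ be the maximum of $T_{\mathrm{in}}$ and the last spike time. For $t\ge T_{\mathrm{term}}$ there are no atoms in $\dot Q$, so $Q(t)$ is constant, and hence $Q^{\mathrm{out}}(t)=\sigma(Q(t))$ is constant. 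By \eqref{eq:neuron-charge-dynamics}, with $I_{\mathrm{in}}=0$ and $\dot Q=0$, $V$ is also constant there.

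\textbf{Step 2: the terminal state is silent.} This is the step I expect to be the main obstacle. I must show that $V(T_{\mathrm{term}})\in\mathcal V^*(Q(T_{\mathrm{term}}))$. Reading the definition of the silent set in Appendix~\ref{app:silence} as \emph{the set of states from which, without input, no further spike is emitted, and whose complement forces a spike in finite time}, a non-silent state after $T_{\mathrm{term}}$ would generate another spike. That contradicts the choice of $T_{\mathrm{term}}$ as the last spike time.

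If the definition does not directly give ``non-silent implies eventual spike,'' I would argue with the distance $\Phi$. After $T_{\mathrm{in}}$, $\Phi$ is unchanged between spikes, because neither $V$ nor $Q$ moves. At each spike, $\Phi$ either drops to $0$ or decreases by at least $\delta$, by \eqref{eq:strict-progress}. Termination of spiking therefore means the neuron stops either because it reached silence, or because it sits off silence with no spike ever triggered. The second case must be excluded by the event mechanism's convention that states outside $\mathcal V^*$ are not quiescent.

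\textbf{Step 3: identify the terminal discharge.} Apply Lemma~\ref{lem:single-neuron-locc} at $T=T_{\mathrm{term}}$ with $V(0)=Q(0)=0$. Since $I_{\mathrm{in}}$ vanishes after $T_{\mathrm{in}}$, this gives
\[
Q(T_{\mathrm{term}})+C\,V(T_{\mathrm{term}})=Q^{\mathrm{in}}(0,T_{\mathrm{term}})=Q^{\mathrm{in}}(0,T_{\mathrm{in}}).
\]
By Step~2 and \eqref{eq:input-charge-slice}, this total lies in $\Omega_{Q(T_{\mathrm{term}})}$. The slices $\Omega_Q$ partition $\mathbb R$, so Lemma~\ref{lem:unique-silent-discharge} yields $Q(T_{\mathrm{term}})=\mathcal D\bigl(Q^{\mathrm{in}}(0,T_{\mathrm{in}})\bigr)$. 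Applying $\sigma$ gives the stated formula for $Q^{\mathrm{out}}_{\mathrm{term}}$.

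The right-hand side of this formula depends only on the aggregate injected charge. It is therefore independent of spike timing, refractory stochasticity and the particular trajectory, even though $T_{\mathrm{term}}$ itself may be random.
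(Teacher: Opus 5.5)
Your proposal is correct and follows the paper's proof essentially step for step. Lemma~\ref{lem:finite-spiking} gives finitely many spikes after $T_{\mathrm{in}}$; you then apply LoCC at the terminal time and get uniqueness from the partition via Lemma~\ref{lem:unique-silent-discharge}. Silence of the terminal state rests on the same implicit convention the paper uses when it simply writes ``since the post-spike state is silent.'' Your choice $T_{\mathrm{term}}=\max(T_{\mathrm{in}},\text{last spike time})$ is slightly more careful than the paper's, which applies LoCC only up to the last spike, and that spike may occur before $T_{\mathrm{in}}$. For this choice, only spikes after $T_{\mathrm{in}}$ matter, so you do not need finitely many spikes on $[0,T_{\mathrm{in}}]$. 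That is fortunate, because the progress argument would not give it: spikes landing directly in silence consume no $\delta$.
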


\emph{Proof sketch.}
Finite termination follows from Lemma~\ref{lem:finite-spiking}.
Applying the single-neuron LoCC (Lemma~\ref{lem:single-neuron-locc}) at the
terminal time expresses the terminal discharge level in terms of the aggregate
input charge.
Uniqueness is then guaranteed by the decoding structure
(Lemma~\ref{lem:unique-silent-discharge}), and the output follows by
application of the decoding map $\sigma$.
A complete proof is provided in Appendix~\ref{app:proof-single-neuron-determinism}.

\subsection{Network-Level Determinism under Acyclic Coupling}
\label{subsec:network-determinism}

We now extend single-neuron determinism to the network level, focusing on acyclic interaction graphs. We show that for these topologies, the terminal decoded output remains strictly invariant to temporal stochasticity—including variability in input delivery, spike emission, and transmission delays—allowing the network to be evaluated sequentially along its topological order.


\begin{theorem}[Acyclic networks admit a unique timing-invariant terminal output]
\label{thm:acyclic-network-determinism}
Consider an event-driven spiking neural network with synaptic weight matrix $W$
whose interaction graph is acyclic.
Assume each neuron satisfies Theorem~\ref{thm:single-neuron-determinism}.
Fix an external input episode and let $u\in\mathbb R^n$ be the corresponding
external aggregate charge vector.

Then the network terminates after finitely many spike events and admits a unique terminal
decoded output $\kappa^\ast\in\mathbb R^n$ satisfying
\begin{equation}
\label{eq:terminal-fixed-point}
\kappa^\ast \;=\; (\sigma\circ\mathcal D)\!\left(u + W\,\kappa^\ast\right),
\end{equation}
where $(\sigma\circ\mathcal D)$ is applied componentwise.
\end{theorem}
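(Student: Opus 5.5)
The proof will be an induction along a topological order of the acyclic interaction graph. Since $W$ is acyclic, we relabel the neurons so that $W_{ij}\neq 0$ only if $j<i$; then $W$ is strictly lower triangular. The inductive claim for neuron $i$ has three parts:
\begin{enumerate}
\item neuron $i$ emits finitely many spikes;
\item its total input current has finite support, vanishing after some finite time $T_i$;
\item its output $Q^{\mathrm{out}}_i(t)$ is eventually constant, equal to $\kappa_i^\ast := (\sigma\circ\mathcal D)\bigl(u_i + \sum_{j<i}W_{ij}\kappa_j^\ast\bigr)$.
\end{enumerate}
Each $\kappa^\ast_i$ depends only on $u$ and $\kappa^\ast_1,\dots,\kappa^\ast_{i-1}$. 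The vector $\kappa^\ast$ is therefore defined by forward substitution, independently of any timing realization, and it satisfies \eqref{eq:terminal-fixed-point} by construction.

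\textbf{Inductive step for neuron $i$.} By the induction hypothesis every presynaptic neuron $j<i$ emits finitely many spikes, the last at some time $t^j_{\mathrm{last}}$.
\begin{itemize}
\item \emph{Finite support of the input.} The synaptic current \eqref{eq:network-synaptic-current} is a finite sum of delayed, finitely supported kernels, because the delays $\tau_{ij}$ are bounded and the kernels have finite support. Combined with the external current supported on $[0,T]$, the input $I_{\mathrm{in},i}$ vanishes after a finite $T_{\mathrm{in},i}$.
\item \emph{Aggregate input charge.} Integrating and using the charge normalization \eqref{eq:kernel-charge-normalization}, each spike contributes exactly $W_{ij}q^{\mathrm{out}}_{j,f}$ and the baseline term contributes $W_{ij}q_0$. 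Telescoping the output representation gives $q_0+\sum_f q^{\mathrm{out}}_{j,f} = Q^{\mathrm{out}}_j(\infty)=\kappa^\ast_j$. Hence
\[
Q^{\mathrm{in}}_i(0,T_{\mathrm{in},i}) = u_i + \sum_{j<i} W_{ij}\kappa^\ast_j,
\]
whatever the spike times, delays, or kernel shapes.
\item \emph{Applying the single-neuron result.} Theorem~\ref{thm:single-neuron-determinism} now applies to neuron $i$ with this finitely supported input. It gives finitely many spikes (Lemma~\ref{lem:finite-spiking}) and terminal output $(\sigma\circ\mathcal D)$ of that aggregate charge, which is $\kappa^\ast_i$.
\end{itemize}

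The base case consists of the source neurons, whose only input is external. For them the claim is Theorem~\ref{thm:single-neuron-determinism} applied directly. Since the network is finite, the total number of spike events is a finite sum over neurons, which gives termination.

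\textbf{Uniqueness.} Suppose $\kappa$ is any solution of \eqref{eq:terminal-fixed-point}. Because $W$ is strictly lower triangular, the $i$-th component reads $\kappa_i=(\sigma\circ\mathcal D)(u_i+\sum_{j<i}W_{ij}\kappa_j)$. Induction on $i$ then forces $\kappa=\kappa^\ast$. The map $\mathcal D$ is single-valued by Lemma~\ref{lem:unique-silent-discharge}, so each step has exactly one value.

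\textbf{Main obstacle.} The delicate step is justifying that the total charge transmitted by neuron $j$ equals $\kappa_j^\ast$ exactly. This requires three things:
\begin{enumerate}
\item handling the baseline $q_0 = \sigma(0)$ term;
\item exchanging the integral with the sum over spikes, which is legitimate only because there are finitely many spikes;
\item ensuring that downstream integration runs until after the last delayed kernel has finished, rather than over a fixed $[0,T]$.
\end{enumerate}
I would handle this by integrating each neuron's input over $[0,\infty)$, and by noting that finiteness of spikes is exactly what the induction supplies before the integral for neuron $i$ is needed. Stochastic delays and refractory constraints enter only through the finite times $T_{\mathrm{in},i}$, never through the charge totals, which is why the conclusion is timing-invariant.
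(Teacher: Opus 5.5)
Your proposal is correct and follows the same route as the paper. Like the paper, you induct along a topological order: each neuron's aggregate inflow $u_i+\sum_{j<i}W_{ij}\kappa_j^\ast$ is fixed by the upstream terminal outputs and then fed into Theorem~\ref{thm:single-neuron-determinism}. You also make explicit several points the paper's proof passes over: finite support of downstream synaptic input, the baseline $q_0$ term and the identity $q_0+\sum_f q^{\mathrm{out}}_{j,f}=\kappa_j^\ast$ via kernel normalization, and uniqueness of the fixed-point solution from strict lower triangularity of $W$. These additions tighten the argument without changing it.
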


\noindent Equivalently, if $z:=u+W\kappa^\ast$ denotes the total inflow charge vector at termination,
then $\kappa^\ast=(\sigma\circ\mathcal D)(z)$.

\emph{Proof sketch.}
Acyclicity yields a topological ordering in which interactions propagate strictly forward.
Along this order, each neuron's aggregate input charge is uniquely determined by the external input and upstream terminal outputs.
Single-neuron determinism (Theorem~\ref{thm:single-neuron-determinism}) then fixes the terminal decoded output uniquely and independently of timing.
Induction over the ordering gives a unique timing-invariant terminal network output.
A complete proof is provided in Appendix~\ref{app:proof-acyclic}.

\begin{remark}[ANN representation of the terminal computation]
\label{rem:ann-repr}
Equation~(\ref{eq:terminal-fixed-point}) is exactly the input--output relation of a ANN
with weight matrix $W$ and activation given by the single-neuron terminal map $(\sigma\circ\mathcal D)$
from Theorem~\ref{thm:single-neuron-determinism}.
In particular, no quantization is assumed at this level; quantized cases arise under specific realizations of $\mathcal D$ (see Section~\ref{sec:realization-and-correspondence}).
\end{remark}

\begin{remark}[Finite-time termination without uniform bounds]
\label{rem:no-uniform-bounds}
Once the external input episode has ended, strict progress guarantees that spiking activity terminates in finite time.
However, even for a single neuron, neither the number of post-input spikes nor the termination time admits a uniform deterministic upper bound
that depends only on the total injected charge and neuron-level constants:
different temporal realizations of the same aggregate input can drive the state arbitrarily far from silence,
leading to arbitrarily many spikes after input cessation.
Consequently, network-level convergence-time bounds that depend only on coarse graph parameters (e.g., depth) and local timing constants
are not valid in general.
\end{remark}

\subsection{Implications, Counterexamples, and the Role of Acyclicity}
\label{subsec:determinism-discussion}

The results of Section~\ref{subsec:network-determinism} establish that, for
acyclic network architectures composed of neurons satisfying the design
conditions of Section~\ref{sec:charge-conserving-snn}, temporal stochasticity
affects only transient trajectories and not the terminal decoded output.
We now clarify the scope of this invariance and show, by a minimal
counterexample, that it does not extend beyond acyclic interaction graphs.

\paragraph{Terminal invariance under temporal perturbations.}
Under the conditions of Theorem~\ref{thm:acyclic-network-determinism}, the terminal
decoded output depends only on the aggregate injected charge and is therefore
invariant under a broad class of temporal perturbations.
These include reordering of input delivery in time, neuron-level processing
latencies, synaptic transmission delays, and temporary spike-suppression
mechanisms such as refractory or inhibition windows, provided that the aggregate
injected input remains unchanged.
Temporal variability may substantially alter intermediate spike patterns, but
leaves no footprint on the terminal decoded state.

\paragraph{A minimal counterexample beyond acyclicity.}
Once directed cycles are introduced into the interaction graph, the inductive
charge-propagation argument in Theorem~\ref{thm:acyclic-network-determinism} no longer applies.
Even if every neuron satisfies the same single-neuron design conditions, recurrent coupling can
break terminal determinism.

\begin{example}[Static fixed points need not be dynamically reachable]
\label{ex:unreachable}
Let $\kappa\in\mathcal K^3$ denote the decoded terminal outputs.
Consider the fixed-point equation
\[
\kappa=\sigma_q(W\kappa+b),
\qquad
\sigma_q(x):=\lfloor [x]_+ \rfloor \ \text{(componentwise)},
\]
with
\[
W=
\begin{pmatrix}
0 & 0 & 0\\
1 & 0 & -2\\
1 & -2 & 0
\end{pmatrix},
\qquad
b=
\begin{pmatrix}
1\\0\\0
\end{pmatrix}.
\]
This network contains a directed cycle between neurons $2$ and $3$.
The equation admits multiple solutions, e.g.,
\[
\kappa=(1,1,0)^\top
\quad\text{and}\quad
\kappa=(1,0,1)^\top.
\]
However, the synchronous iteration
\[
\kappa^{(t+1)}=\sigma_q(W\kappa^{(t)}+b),
\qquad
\kappa^{(0)}=(0,0,0)^\top,
\]
produces a 2-cycle $(1,0,0)^\top \leftrightarrow (1,1,1)^\top$ and hence does not converge to any fixed point.
\end{example}

\paragraph{Acyclicity as a sharp structural boundary.}
Example~\ref{ex:unreachable} shows that acyclicity is not a proof artifact but a structural boundary:
it is precisely the condition under which charge bookkeeping, well-posed decoding, and finite spiking
collapse temporal dynamics into a unique terminal output.
With directed cycles, charge conservation alone no longer ensures determinism; additional stabilization
(e.g., damping, scheduling, or update constraints) is generally required.
Further concrete counterexamples, including single-neuron self-loops and timing-sensitive feedback,
are collected in Appendix~\ref{app:counterexamples}.
\section{Realization of Charge-Conserving SNNs and Correspondence to QANNs}
\label{sec:realization-and-correspondence}

We now present a concrete event-driven spiking neuron that realizes the charge-based design principles introduced in Section~\ref{sec:charge-conserving-snn}.
The purpose of this construction is not to propose a new neuron model, but to demonstrate that the abstract framework admits a simple and internally consistent realization.
The resulting dynamics can be viewed as a generalized ST-BIF neuron.


\subsection{Realization of charge-conserving neurons}
\label{subsec:neuron-realization}
As a concrete realization, we specify an event-driven mechanism that implements the decoding map $\mathcal D$
in Section~\ref{sec:charge-conserving-snn} via spike-triggered transitions between discharge levels,
while preserving charge conservation and guaranteeing termination.

Consider a neuron whose membrane potential $V(t)$ and discharge variable $Q(t)$ are related by the charge-conservation identity
\[
C\,\dot V(t)=I_{\mathrm{in}}(t)-\dot Q(t),
\]
which enforces the Law of Charge Conservation at the neuron level.

\emph{(i) Decoding structure.}
To realize the finite decoding required in Section~\ref{sec:charge-conserving-snn}, we restrict the discharged membrane charge to a finite ordered set
\[
\mathcal Q=\{Q_m,Q_{m+1},\ldots,Q_M\}, \qquad m\le 0\le M,
\]
so that the terminal decoding takes values in $\mathcal Q$.
For $q\in\mathcal Q$, define its successor and predecessor by
\[
\begin{aligned}
\operatorname{succ}(q)&:=\min\{q'\in\mathcal Q:\ q'>q\},\\
\operatorname{pred}(q)&:=\max\{q'\in\mathcal Q:\ q'<q\},
\end{aligned}
\]
whenever they exist.

\emph{(ii) Strict progress via event-driven discharge updates.}
To accommodate impulsive input currents while preserving charge bookkeeping, we introduce the pre-decision potential
\[
\hat V(t):=V(t^-)+\frac{Q^{\mathrm{in}}(t)-Q^{\mathrm{in}}(t^-)}{C},
\]
which represents the membrane potential immediately after integrating any instantaneous input charge and before making a spike decision.
Whenever the refractory condition permits spike generation, a spike is emitted purely as an event mechanism to enforce transitions between neighboring discharge levels:
\begin{equation}
\begin{cases}
\hat V(t)\ge V_{\mathrm{thr}}(Q(t^-)) \ \text{and}\ Q(t^-)<Q_M,
& \text{upward spike},\\[4pt]
\hat V(t)<0 \ \text{and}\ Q(t^-)>Q_m,
& \text{downward spike},
\end{cases}
\label{eq:region_spike_rule}
\end{equation}
where
\[
V_{\mathrm{thr}}(q):=\dfrac{\operatorname{succ}(q)-q}{C}, \qquad q<Q_M.
\]
At a spike time $t_f$, define the discharge increment
\[
q_f^{\mathrm{dis}}=
\begin{cases}
\operatorname{succ}(Q(t_f^-))-Q(t_f^-), & \text{upward spike},\\[4pt]
\operatorname{pred}(Q(t_f^-))-Q(t_f^-), & \text{downward spike},
\end{cases}
\]
and update
\[
Q(t_f)=Q(t_f^-)+q_f^{\mathrm{dis}},\qquad
V(t_f)=\hat V(t_f)-\dfrac{q_f^{\mathrm{dis}}}{C}.
\]
By construction $Q(t_f)\in\mathcal Q$, and each spike advances $Q$ by one step in $\mathcal Q$ toward a terminal discharge level once the input episode ends,
thereby realizing strict progress and ensuring a well-defined terminal decoded output.
A formal verification that this realization satisfies the design conditions of Section~\ref{sec:charge-conserving-snn}
is provided in Appendix~\ref{app:verification}.

\begin{remark}[Interpretation as a generalized ST-BIF neuron]
The event-driven mechanism above can be viewed as a generalized spike-triggered balanced integrate-and-fire (ST-BIF) neuron:
the membrane integrates charge according to the conservation law, while spikes serve solely as event triggers that enforce discrete discharge transitions between neighboring levels in $\mathcal Q$.
This interpretation connects the present realization to the ST-BIF formulation used in our earlier work~\cite{you2025vistream},
but here it is derived as a simple implementation of our decoding and strict-progress design conditions.
\end{remark}

\begin{figure}[!t]
  \centering
  \includegraphics[width=0.85\linewidth]{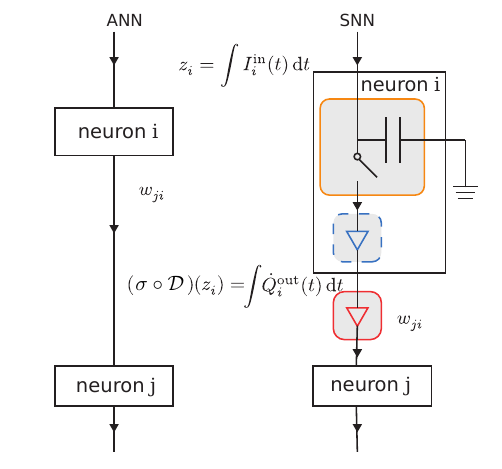}
  \caption{
  Structural correspondence between charge-based spiking neurons and artificial
  neurons.
  }
  \label{fig:diagram2}
\end{figure}

\subsection{SNN terminal outputs to QANN representations}
\label{subsec:snn-to-ann}

In this subsection, we prove that output of a Charge-conserving SNN admit a QANN representation using the above realization.
As before, this correspondence is purely representational: it depends only on the terminal input--output behavior and abstracts away spike timing and transients.

\begin{theorem}[Terminal output admits a QANN representation]
\label{thm:snn_to_qann}
Consider an acyclic spiking neural network whose neurons follow the model in
Section~\ref{subsec:neuron-realization}, with synaptic weight matrix $W$.
Fix an external input episode and let $u\in\mathbb R^n$ denote its external aggregate charge vector.
Then the terminal output vector $\kappa^\ast$ equals the output of the acyclic network
with weights $W$ and componentwise activation $\sigma\circ\mathcal D$, evaluated at input $u$.
\end{theorem}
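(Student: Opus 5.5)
The plan is to reduce the statement to Theorem~\ref{thm:acyclic-network-determinism}. The only additional work is to check that the concrete neuron of Section~\ref{subsec:neuron-realization} satisfies the hypotheses of Theorem~\ref{thm:single-neuron-determinism}, and then to read the fixed-point equation as an acyclic feedforward evaluation.

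\textbf{Step 1: the realization satisfies the design conditions.} Charge conservation holds by construction, since the realization is defined through $C\dot V=I_{\mathrm{in}}-\dot Q$ with atomic discharges. Lemma~\ref{lem:single-neuron-locc} therefore applies verbatim.

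For the decoding structure, I would identify the silent sets from the spike rule~(\ref{eq:region_spike_rule}):
\[
\mathcal V^*(Q_k)=[0,V_{\mathrm{thr}}(Q_k)) \quad \text{for } m<k<M,
\]
with the intervals extended to $-\infty$ at $Q_m$ and to $+\infty$ at $Q_M$. The slices are then $\Omega_{Q_k}=[Q_k,Q_{k+1})$, with $(-\infty,Q_{m+1})$ and $[Q_M,\infty)$ at the ends. These clearly partition $\mathbb R$, and $\mathcal D$ becomes a staircase (clipped floor) map onto $\mathcal Q$.

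For strict progress, I would argue as follows once input has stopped. An upward spike moves $V$ down by $(\operatorname{succ}(q)-q)/C$, and a downward spike moves $V$ up by $(q-\operatorname{pred}(q))/C$, while $Q$ moves one level toward the slice containing $z=Q+CV$. This is exactly the setting of Lemma~\ref{lem:finite-spiking}. Because $z$ is invariant under spikes, I would track $z$ directly: the number of remaining spikes is bounded by the finite index distance between the current level and $\mathcal D(z)$. This sidesteps the need for a uniform $\delta$, though one can also take $\delta=\min_k (Q_{k+1}-Q_k)/C$ and reconcile it with $\Phi$ measured after the level change.

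This verification is the main obstacle. The convention that $\Phi$ is evaluated with the post-spike $Q(t_f)$ and the boundary levels $Q_m,Q_M$ require care. If the $\Phi$-based inequality proves awkward, I would fall back on the monotone index-distance argument, which directly yields finite spiking.

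\textbf{Step 2: apply the network result.} With each neuron satisfying Theorem~\ref{thm:single-neuron-determinism}, Theorem~\ref{thm:acyclic-network-determinism} gives finite termination and a unique $\kappa^\ast$ with $\kappa^\ast=(\sigma\circ\mathcal D)(u+W\kappa^\ast)$. Charge normalization~(\ref{eq:kernel-charge-normalization}) and finite kernel support ensure that the aggregate inflow to neuron $i$ is $u_i+\sum_j W_{ij}\kappa^\ast_j$. This holds once we check that the baseline term $q_0$ combined with the spike increments telescopes to $\sigma(Q_j^{\mathrm{term}})$.

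\textbf{Step 3: identify with the acyclic QANN.} Take a topological order of the graph, so that $W$ is strictly lower triangular after permutation. The fixed-point equation then unrolls as $\kappa^\ast_i=(\sigma\circ\mathcal D)(u_i+\sum_{j<i}W_{ij}\kappa^\ast_j)$. This is exactly the forward pass of the acyclic network with weights $W$ and activation $\sigma\circ\mathcal D$ on input $u$, and uniqueness follows by induction along the order. The network is a QANN because $\mathcal D$ takes finitely many values in $\mathcal Q$, so $\sigma\circ\mathcal D$ is a finite-valued (quantized) activation.
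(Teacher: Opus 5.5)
Your proposal is correct and follows essentially the same route as the paper. The paper also takes the neuron-level conditions from its verification in Appendix~\ref{app:verification}, invokes Theorem~\ref{thm:acyclic-network-determinism} to obtain $\kappa^\ast=(\sigma\circ\mathcal D)(u+W\kappa^\ast)$, and unrolls this along a topological order to read off the acyclic QANN forward pass. Your extra details are sound refinements rather than a different approach: the index-distance bound on post-input spikes (cleaner than the $\Phi$/$\delta$ argument when $\mathcal Q$ is finite) and the check that $q_0$ plus the output increments telescopes to $\sigma(Q_j^{\mathrm{term}})$.
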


\noindent A proof is given in Appendix~\ref{app:proof-SNN2ANN}.
Under the realization in Section~\ref{subsec:neuron-realization}, $\mathcal D$ is discrete-valued,
so the induced network is a QANN.

\begin{remark}[Representation vs.\ dynamics]
The correspondence concerns only the terminal input--output map and ignores spike timing.
Any implementation realizing the same terminal map $\sigma\circ\mathcal D$ shares this representation.
\end{remark}

\subsection{Realizing QANNs as SNNS}
\label{subsec:ann-to-snn}

We now consider the reverse direction: realizing a acyclic QANN with quantized activations as an event-driven spiking neural network.
Here the QANN specifies only a static input--output map, so an explicit spiking dynamics (e.g., the realization in Section~\ref{subsec:neuron-realization}) must be constructed.

We focus on quantized activations induced by a discrete decoding map.
Specifically, we assume each activation is of the form $(\sigma\circ\mathcal D)$, where
$\mathcal D:\mathbb R\to\mathcal Q$ takes values in a finite set $\mathcal Q$ (as in Section~\ref{subsec:neuron-realization}).
Equivalently, $\mathcal D$ is piecewise constant on a partition of $\mathbb R$ into disjoint half-open intervals,
and thus $(\sigma\circ\mathcal D)$ is a step-like (quantized) nonlinearity.

\begin{theorem}[QANN-to-SNN realization for quantized activations]
\label{thm:ann_to_snn}
Consider a acyclic QANN whose activation functions are of the form $(\sigma\circ\mathcal D)$,
where $\mathcal D$ is a discrete-valued decoding map that is piecewise constant on a partition of $\mathbb R$
into disjoint half-open intervals (as described above).
Then there exists an event-driven spiking neural network composed of the generalized ST-BIF neurons in
Section~\ref{subsec:neuron-realization} such that, for any fixed input episode,
the terminal decoded output of the SNN coincides with the QANN output.
\end{theorem}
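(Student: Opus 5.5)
The plan is to construct the SNN explicitly, neuron by neuron, so that each generalized ST-BIF neuron of Section~\ref{subsec:neuron-realization} realizes exactly the given quantized activation $\sigma\circ\mathcal D$ as its terminal map. Once that holds, I invoke Theorem~\ref{thm:snn_to_qann}, or equivalently Theorem~\ref{thm:acyclic-network-determinism}, to conclude that the terminal output equals the acyclic network with weights $W$ and activation $\sigma\circ\mathcal D$ evaluated at $u$, i.e.\ the QANN output.

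\textbf{Topology and inputs.} I take the same interaction graph and weight matrix $W$ as the QANN, so acyclicity is inherited. Biases of the QANN are absorbed into the external episode: I choose $I^{\mathrm{ext}}_i$ with $\int_0^T I^{\mathrm{ext}}_i\,\mathrm dt = x_i + b_i$, where $x_i$ is the input to neuron $i$ (zero for hidden units). The baseline term $q_0=\sigma(0)$ in (\ref{eq:network-synaptic-current}) needs care. I will either normalize so that the chosen discharge level $0\in\mathcal Q$ satisfies the convention of the activation, or fold $W_{ij}q_0$ into the bias, so that the aggregate inflow is exactly $u+W\kappa$.

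\textbf{Matching the decoding map (main step).} Let $\mathcal D$ take values $d_m<\dots<d_M$ on the half-open intervals $[a_k,a_{k+1})$, with $a_m=-\infty$ and $a_{M+1}=+\infty$. For monotone $\mathcal D$ I choose $\mathcal Q=\{Q_k\}$ with $Q_k:=a_k$ for $k>m$, and a suitable anchor so that $0\in\mathcal Q$. Under the spike rule (\ref{eq:region_spike_rule}) with $V_{\mathrm{thr}}(q)=(\operatorname{succ}(q)-q)/C$, the silent set at level $Q_k$ is $0\le V<V_{\mathrm{thr}}(Q_k)$, extended to $[0,\infty)$ at $Q_M$ and to $(-\infty,\cdot)$ at $Q_m$. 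The input-charge slice $\Omega_{Q_k}=Q_k+C\mathcal V^*(Q_k)$ is therefore exactly $[Q_k,\operatorname{succ}(Q_k))$. These slices partition $\mathbb R$, and the realized decoding sends $z$ to $Q_k$ iff $z\in[a_k,a_{k+1})$. I then define the neuron's output map $\tilde\sigma(Q_k):=\sigma(d_k)$, so that $\tilde\sigma\circ\mathcal D_{\mathrm{SNN}}=\sigma\circ\mathcal D$ pointwise. Because $\tilde\sigma$ is an arbitrary relabeling of the levels, monotonicity of $\mathcal D$ is not actually needed: only the ordered breakpoints matter, since the SNN level index merely encodes which interval contains $z$. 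Strict progress is taken from Appendix~\ref{app:verification}, so Theorem~\ref{thm:single-neuron-determinism} applies.

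\textbf{Main obstacle.} I expect the hardest part to be aligning conventions at the boundaries. First, the half-open direction must match the $\ge V_{\mathrm{thr}}$ and $<0$ inequalities; if the QANN uses $(a_k,a_{k+1}]$ intervals, I would flip signs or shift thresholds. Second, the zero initial condition $Q(0)=0$ must correspond to a level in $\mathcal Q$. I would handle this by translating all levels by a constant and compensating via the external bias charge. After this, the terminal output follows by induction over the topological order exactly as in Theorem~\ref{thm:acyclic-network-determinism}.
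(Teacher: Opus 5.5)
Your proposal is correct and follows essentially the paper's route. It uses the same acyclic graph and weights $W$, places one generalized ST-BIF neuron at each node with its terminal map matched to $\sigma_i\circ\mathcal D_i$ via Appendix~\ref{app:verification}, and then inducts along a topological order (Theorem~\ref{thm:acyclic-network-determinism}). Placing the levels at the breakpoints and relabeling the readout as $\tilde\sigma(Q_k):=\sigma(d_k)$ usefully spells out the paper's one-line step ``choosing its decoding so that the induced terminal map equals $\phi_i$'': an ST-BIF neuron always decodes to the left endpoint of its slice $[q,\operatorname{succ}(q))$, so $Q_i^\ast=\mathcal D_i(z_i)$ cannot hold literally for a general $\mathcal D_i$. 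Your boundary worries also resolve simply: insert $0$ as an extra level, which is harmless under relabeling and avoids shifting the input episode, and fold nothing into the bias, because the baseline term in Equation~(\ref{eq:network-synaptic-current}), read as $\tilde\sigma_j(0)$, already makes the aggregate inflow exactly $u+W\kappa^\ast$ (folding it in as well would double-count).
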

A detailed proof is provided in Appendix~\ref{app:proof-ANN2SNN}.

\begin{remark}[Scope]
The restriction to quantized activations reflects the discrete decoding structure of the realized spiking neurons.
More general activation functions may be approximated by refining the decoding partition (equivalently, enlarging the discharge set $\mathcal Q$),
but approximation guarantees are beyond the scope of the present work.
\end{remark}

\section{Conclusion}

In this work, we addressed the fundamental challenge of designing reliable neuromorphic systems using inherently stochastic continuous-time hardware. We developed a unified framework that establishes LoCC as a rigorous algorithmic constraint which, when coupled with minimal neuron-level design conditions and acyclic connectivity, guarantees deterministic network computation. This structural determinism establishes an exact representational correspondence between event-driven SNNs and quantized ANNs. This result fundamentally reframes the relationship between the two paradigms: rather than viewing SNNs as noisy approximations of formal neural networks, we demonstrate that they can serve as precise, energy-efficient dynamic realizations of static algorithms. This perspective paves the way for a unified design methodology that seamlessly translates the algorithmic power of deep learning onto asynchronous neuromorphic substrates without approximation errors. Future work will explore the stability properties of charge-conserving recurrent networks and leverage these conservation constraints to guide biologically plausible learning algorithms.

\section*{Acknowledgments}
This work is partially supported by Shanghai Artificial Intelligence Laboratory, the National Key R\&D Program of China under Grant No.~2022YFB4500200 (Z.H.) and No.~2022YFA1008200 (Y.Z.), the National Natural Science Foundation of China under Grant No.~62102257 (Z.H.) and No.~12571567 (Y.Z.) and the Natural Science Foundation of Shanghai under Grant No.~25ZR1402280 (Y.Z.).

\bibliographystyle{icml2026}
\bibliography{bibliography}

%

\newpage
\appendix
\onecolumn

\appendix

\section{Technical Conventions and Measure-Theoretic Formulation}
\label{app:measure}

All state variables are taken to be right-continuous with left limits.
For a spike time $t_k$, we write
\[
X(t_k^-) := \lim_{t\uparrow t_k} X(t), \qquad
\Delta X(t_k) := X(t_k) - X(t_k^-).
\]

Spike-triggered discharge is represented as an atomic measure
\[
dQ(t) := \sum_f q_f \delta(t-t_f)\,dt,
\]
so that $Q(T)-Q(0)=\int_{(0,T]} dQ(t)$.
with jumps occurring at spike times $\{t_f\}$.

Throughout, integrals are taken over right-closed intervals $(0,T]$,
consistent with the càdlàg convention.
In particular, jump contributions at time $t_f$ are included in
$Q(t_f)$ but not in $Q(t_f^-)$.

Equivalently, cumulative spike effects may be represented using the step
function $H$, defined by $H(t)=0$ for $t<0$ and $H(t)=1$ for $t\ge 0$, so that
\[
Q(t)=Q(0)+\sum_f q_f\, H(t-t_f).
\]

\section{Silent Sets and Decoding Structure}
\label{app:silence}

For each discharge level $Q\in\mathcal Q$, recall the definition
\[
\Omega_Q := \{ Q + C V^* : V^* \in \mathcal V^*(Q) \}.
\]

\begin{definition}[Partition of input-charge space]
\label{def:partition}
The family $\{\Omega_Q\}_{Q\in\mathcal Q}$ is said to form a partition of
$\mathbb R$ if it satisfies:
\begin{enumerate}
  \item \textbf{Covering:} $\bigcup_{Q\in\mathcal Q} \Omega_Q = \mathbb R$.
  \item \textbf{Disjointness:} $\Omega_Q \cap \Omega_{Q'} = \varnothing$
  whenever $Q \neq Q'$.
\end{enumerate}
\end{definition}

This structure ensures that the total injected charge uniquely determines
the discharge level at any silent steady state.

\section{Finite Spiking and Eventual Rest}
\label{app:finite-spiking}

Define the distance-to-silence function
\[
\Phi(t) := \mathrm{dist}\bigl(V(t), \mathcal V^*(Q(t))\bigr).
\]
The neuron model is designed to satisfy a \emph{strict progress condition}:
there exists a constant $\delta>0$ such that, at each spike time $t_f$, either
the post-spike state is silent,
\[
V(t_f)\in\mathcal V^*(Q(t_f)),
\]
or the distance to silence strictly decreases,
\[
\Phi(t_f) \le \Phi(t_f^-) - \delta .
\]
Once a silent state is reached, no further spikes are admissible unless the
state leaves the silent set due to external input.

\section{Additional Proofs}
\label{app:additional-proofs}

\subsection{Proof of Lemma~\ref{lem:single-neuron-locc}}
\label{app:proof-locc-single}
\begin{proof}
The subthreshold membrane dynamics satisfy
\[
C\,\dot V(t)=I_{\mathrm{in}}(t)-\dot Q(t),
\]
where $\dot Q(t)$ is a sum of Dirac impulses corresponding to spike-triggered discharge events.
Integrating over $[0,T]$ yields
\[
C\bigl(V(T)-V(0)\bigr)
=
\int_0^T I_{\mathrm{in}}(t)\,\mathrm{d}t
-
\bigl(Q(T)-Q(0)\bigr),
\]
which is exactly Equation~(\ref{eq:locc-single-neuron}).
\end{proof}

\subsection{Proof of Lemma~\ref{lem:unique-silent-discharge}}
\label{app:proof-unique-silent-discharge}
\begin{proof}
Let $z\in\mathbb R$ denote the total injected input charge.
Since $\{\Omega_q\}_{q\in\mathcal Q}$ forms a disjoint partition of $\mathbb R$,
there exists a unique $q^\ast\in\mathcal Q$ such that $z\in\Omega_{q^\ast}$.

Consider any trajectory that reaches an eventual-rest state $(V^\ast,Q^\ast)$ under total input charge $z$.
Applying the Law of Charge Conservation over $[0,\infty)$ gives
\[
z=Q^\ast+CV^\ast.
\]
Since $(V^\ast,Q^\ast)$ is silent, we have $V^\ast\in\mathcal V^\ast(Q^\ast)$, hence
$z\in\Omega_{Q^\ast}$ by definition of $\Omega_{Q^\ast}$.
By uniqueness of the partition, $Q^\ast=q^\ast$.
Therefore the terminal discharge level, and hence the decoded output $\sigma(Q^\ast)$, is uniquely determined by $z$.
\end{proof}

\subsection{Proof of Lemma~\ref{lem:finite-spiking}}
\label{app:proof-finite-spiking}
\begin{proof}
Let $\{t_k\}_{k\ge1}$ denote the spike times after the input has ended.
If for some $k$ the post-spike state satisfies $V(t_k)\in\mathcal V^\ast(Q(t_k))$,
then by the trigger rule no further spikes are admissible and the proof is complete.

Otherwise, by the strict progress condition Equation~(\ref{eq:strict-progress}), every spike yields a uniform decrease
\[
\Phi(t_k)\le \Phi(t_k^-)-\delta
\qquad\text{for some }\delta>0.
\]
Iterating gives $\Phi(t_k)\le \Phi(t_1^-)-(k-1)\delta$.
Since $\Phi\ge 0$, this cannot hold for arbitrarily large $k$, hence only finitely many spikes occur and the trajectory reaches eventual rest.
\end{proof}

\subsection{Proof of Theorem~\ref{thm:single-neuron-determinism}}
\label{app:proof-single-neuron-determinism}
\begin{proof}
By Lemma~\ref{lem:finite-spiking}, the neuron emits only finitely many spikes after the input has ended.
Let $t_{\mathrm{term}}$ be the time of the last spike and set $Q^{\mathrm{term}}:=Q(t_{\mathrm{term}})$.

Applying the single-neuron Law of Charge Conservation (Lemma~\ref{lem:single-neuron-locc}) over $[0,t_{\mathrm{term}}]$ gives
\[
Q^{\mathrm{in}}(0,t_{\mathrm{term}})
=
Q^{\mathrm{term}}+C V(t_{\mathrm{term}}).
\]
Since the post-spike state is silent, $V(t_{\mathrm{term}})\in\mathcal V^\ast(Q^{\mathrm{term}})$.
By Lemma~\ref{lem:unique-silent-discharge}, the terminal discharge level $Q^{\mathrm{term}}$ is uniquely determined by the aggregate injected charge
$Q^{\mathrm{in}}(0,t_{\mathrm{term}})$ (together with the fixed initial state).
Finally, the terminal decoded output equals $\sigma(Q^{\mathrm{term}})$, proving that the terminal output depends only on the aggregate input and is independent of its temporal realization.
\end{proof}

\subsection{Proof of Theorem~\ref{thm:acyclic-network-determinism}}
\label{app:proof-acyclic}
\begin{proof}
Fix an acyclic interaction graph and a topological ordering of neurons.
Fix an external input episode and let $u\in\mathbb R^n$ denote its external aggregate charge vector.
We argue by induction along the ordering.

For the first neuron, the total inflow charge is $z_1=u_1$. By
Theorem~\ref{thm:single-neuron-determinism},
\[
\kappa_1^\ast = (\sigma_1\circ\mathcal D_1)(z_1) = (\sigma_1\circ\mathcal D_1)(u_1).
\]

Assume $\kappa_1^\ast,\dots,\kappa_{i-1}^\ast$ have been uniquely determined.
Acyclicity implies that the total inflow charge at neuron $i$ is
\[
z_i \;=\; u_i + \sum_{j<i} w_{ij}\,\kappa_j^\ast .
\]
Applying Theorem~\ref{thm:single-neuron-determinism} yields
\[
\kappa_i^\ast = (\sigma_i\circ\mathcal D_i)(z_i)
= (\sigma_i\circ\mathcal D_i)\!\left(u_i+\sum_{j<i} w_{ij}\,\kappa_j^\ast\right).
\]

Proceeding to $i=n$ yields a unique terminal output vector $\kappa^\ast$.
Collecting the nodewise relations gives the componentwise fixed-point equation
\[
\kappa^\ast = (\sigma\circ\mathcal D)(u+W\kappa^\ast),
\]
which is \eqref{eq:terminal-fixed-point}.
\end{proof}
\section{Counterexamples}
\label{app:counterexamples}

This appendix provides concrete counterexamples showing that acyclicity in
Theorem~\ref{thm:acyclic-network-determinism} is a sharp boundary for deterministic terminal behavior.

Throughout, we use the quantized ReLU
\[
\sigma_q(x):=\bigl\lfloor [x]_+ \bigr\rfloor
\quad\text{(applied componentwise)}.
\]

\begin{example}[Positive self-loop: multiple fixed points with finite termination]
\label{ex:single-positive}
Consider the scalar fixed-point equation
\[
\kappa=\sigma_q(0.5\,\kappa+0.6),
\qquad \kappa\in\mathbb Z_{\ge 0}.
\]
Then $\kappa=0$ and $\kappa=1$ are both fixed points:
\[
\sigma_q(0.6)=0,\qquad \sigma_q(1.1)=1,
\]
and no other fixed points exist since $\sigma_q(0.5\kappa+0.6)\le 1$ for all $\kappa\ge 0$.
This captures a minimal multistability mechanism induced by positive feedback: termination may still hold,
but the terminal decoded output need not be unique.
\end{example}

\begin{example}[Negative self-loop: a 2-cycle under the decoded update]
\label{ex:single-negative}
Consider the synchronous decoded update
\[
\kappa^{(t+1)}=\sigma_q(-0.5\,\kappa^{(t)}+1.2),
\qquad \kappa^{(t)}\in\mathbb Z_{\ge 0}.
\]
Starting from $\kappa^{(0)}=0$ yields
\[
\kappa^{(1)}=\sigma_q(1.2)=1,\qquad
\kappa^{(2)}=\sigma_q(0.7)=0,
\]
hence a 2-cycle $0\leftrightarrow 1$.
This illustrates that feedback with inhibition can prevent convergence of the decoded-level dynamics,
so a time-independent terminal output may fail to exist.
\end{example}

\begin{example}[Reachability gap in a cyclic network]
\label{ex:reachability-gap}
Consider the 3-neuron fixed-point equation $\kappa=\sigma_q(W\kappa+b)$ with
\[
W=
\begin{pmatrix}
0 & 0 & 0\\
1 & 0 & -2\\
1 & -2 & 0
\end{pmatrix},
\qquad
b=
\begin{pmatrix}
1\\0\\0
\end{pmatrix},
\qquad
\kappa\in\mathbb Z_{\ge 0}^3.
\]
The directed cycle between neurons $2$ and $3$ yields multiple fixed points, e.g.,
\[
(1,1,0)^\top,\qquad (1,0,1)^\top,
\]
since
\[
W(1,1,0)^\top+b=(1,1,-1)^\top \mapsto (1,1,0)^\top,\quad
W(1,0,1)^\top+b=(1,-1,1)^\top \mapsto (1,0,1)^\top.
\]
However, the synchronous iteration from $\kappa^{(0)}=(0,0,0)^\top$ produces
\[
\kappa^{(1)}=(1,0,0)^\top,\qquad
\kappa^{(2)}=(1,1,1)^\top,\qquad
\kappa^{(3)}=(1,0,0)^\top,
\]
hence a 2-cycle $(1,0,0)^\top \leftrightarrow (1,1,1)^\top$ and does not converge to any fixed point.
This demonstrates a reachability gap: static solutions exist, but the induced decoded dynamics can fail to reach them.
\end{example}

\section{Verification of the neuron dynamics fit the conditions}
\label{app:verification}

We verify that the generalized ST-BIF realization in
Section~\ref{subsec:neuron-realization} satisfies the neuron-level design
conditions imposed in Section~\ref{sec:charge-conserving-snn}, in particular:
(i) a finite decoding structure given by a partition of input-charge slices, and
(ii) strict progress toward silence after the input episode ends (hence finite spiking and a well-defined terminal decoded output).

\emph{(i) Partition of admissible steady states.}
At an eventual-rest steady state, the no-spike condition induced by Equation~(\ref{eq:region_spike_rule}) implies:

\smallskip
\noindent
\emph{Interior levels $Q_m<q<Q_M$.}
Downward spikes are enabled, hence $V^*\ge 0$; upward spikes are prevented only if
$V^*<(\operatorname{succ}(q)-q)/C$. Therefore
\[
\Omega_q=\{\,q+C V^*:\ V^*\in[0,(\operatorname{succ}(q)-q)/C)\,\}=[q,\operatorname{succ}(q)).
\]

\smallskip
\noindent
\emph{Minimal level $q=Q_m$.}
Downward spikes are disabled, so no-spike only requires
$V^*<(\operatorname{succ}(Q_m)-Q_m)/C$, yielding
\[
\Omega_{Q_m}=(-\infty,\operatorname{succ}(Q_m)).
\]

\smallskip
\noindent
\emph{Maximal level $q=Q_M$.}
Upward spikes are disabled, and no-spike requires $V^*\ge 0$, hence
\[
\Omega_{Q_M}=[Q_M,+\infty).
\]

Therefore, the family $\{\Omega_q\}_{q\in\mathcal Q}$ forms a disjoint partition of $\mathbb R$.

\emph{(ii) Strict progress and finite spiking.}
For each discharge level $q\in\mathcal Q$, define the admissible steady-state set
\[
\mathcal V^*(q)=
\begin{cases}
(-\infty,\,V_{\mathrm{thr}}(Q_m)), & q=Q_m,\\
[0,\,V_{\mathrm{thr}}(q)), & Q_m<q<Q_M,\\
[0,+\infty), & q=Q_M.
\end{cases}
\]
Introduce the Lyapunov function
\[
\Phi(t):=\mathrm{dist}\bigl(V(t),\mathcal V^*(Q(t))\bigr).
\]
Let $t_f$ be a spike time. The spike-triggered update in
Section~\ref{subsec:neuron-realization} maps the pre-decision potential $\hat V(t_f)$ to
\[
V(t_f)=\hat V(t_f)-\frac{q_f^{\mathrm{dis}}}{C},
\qquad
Q(t_f)=Q(t_f^-)+q_f^{\mathrm{dis}}\in\mathcal Q,
\]
and $Q$ changes by exactly one step in $\mathcal Q$ (via $\operatorname{succ}$ or $\operatorname{pred}$)
whenever the corresponding direction is admissible.

If $(V(t_f),Q(t_f))\in\mathcal T$, then by the trigger rule Equation~(\ref{eq:region_spike_rule}) no further spikes are possible.
Otherwise, the spike produces a strict decrease in $\Phi$.
Since $\mathcal Q$ is finite, define the minimal step size
\[
\delta :=
\min_{q\in\mathcal Q:\ \operatorname{succ}(q)\ \text{and}\ \operatorname{pred}(q)\ \text{exist}}
\bigl(\operatorname{succ}(q)-q,\; q-\operatorname{pred}(q)\bigr)
>0.
\]
A direct case check on upward vs.\ downward spikes shows that every non-terminal spike satisfies
\[
\Phi(t_f)\le \Phi(t_f^-)-\delta.
\]
Hence, once the external input episode has ended (so that $\hat V$ changes only through the neuron dynamics),
only finitely many non-terminal spikes can occur, after which the trajectory enters $\mathcal T$ and remains silent.
Therefore the neuron emits finitely many spikes and reaches a terminal discharge level, yielding a well-defined terminal decoded output.

\section{SNN--QANN correspondence under the generalized ST-BIF realization}
\label{app:transformation}

\subsection{Proof of Theorem~\ref{thm:snn_to_qann} (SNN $\to$ QANN)}
\label{app:proof-SNN2ANN}
\begin{proof}
Consider an acyclic SNN composed of the generalized ST-BIF neurons realized in
Section~\ref{subsec:neuron-realization}. Fix an external input episode and let
$u\in\mathbb R^n$ denote the corresponding external aggregate charge vector.
By Appendix~\ref{app:verification}, each neuron reaches a terminal discharge level and thus admits
a well-defined terminal decoded output. Denote the single-neuron terminal map by
\[
\kappa_i^\ast = (\sigma_i\circ \mathcal D_i)(z_i),
\]
where $z_i$ is the total aggregate inflow charge received by neuron $i$ over the episode.

Since the interaction graph is acyclic, fix a topological order. By
Theorem~\ref{thm:acyclic-network-determinism}, the inflow charges satisfy
\[
z \;=\; u + W\,\kappa^\ast,
\]
hence for each node $i$,
\[
\kappa_i^\ast
=
(\sigma_i\circ \mathcal D_i)\!\left(u_i+\sum_{j<i} w_{ij}\,\kappa_j^\ast\right).
\]
Collecting these nodewise relations yields an acyclic network with weights $W$ and componentwise
activation $\sigma_i\circ\mathcal D_i$ whose output coincides with the terminal decoded output
$\kappa^\ast$ of the SNN. Under the realization in Section~\ref{subsec:neuron-realization},
$\mathcal D_i$ is discrete-valued, so this network is a QANN.
\end{proof}

\subsection{Proof of Theorem~\ref{thm:ann_to_snn} (QANN $\to$ SNN)}
\label{app:proof-ANN2SNN}

\begin{proof}
Let the QANN be defined on an acyclic directed graph and suppose each activation is quantized as
\[
\phi_i=\sigma_i\circ\mathcal D_i,
\]
with $\mathcal D_i$ discrete-valued and piecewise constant on a partition of $\mathbb R$
(Section~\ref{subsec:ann-to-snn}).

Construct an SNN on the same acyclic graph by assigning to each node $i$ the generalized ST-BIF neuron of
Section~\ref{subsec:neuron-realization} with admissible discharge set $\mathcal Q_i$ and readout $\sigma_i$,
and choosing its decoding so that the induced terminal map equals $\phi_i$.
By Appendix~\ref{app:verification}, for any total inflow charge $z_i$ this neuron reaches a terminal discharge level
$Q_i^\ast=\mathcal D_i(z_i)$ and hence outputs
\[
\kappa_i^\ast=\sigma_i(Q_i^\ast)=(\sigma_i\circ\mathcal D_i)(z_i)=\phi_i(z_i).
\]

Since the graph is acyclic, evaluate nodes along a topological order.
For an external episode with aggregate charge vector $u$, the inflow charges satisfy
$z=u+W\kappa^\ast$ (Theorem~\ref{thm:acyclic-network-determinism}), i.e.,
\[
z_i=u_i+\sum_{j<i} w_{ij}\,\kappa_j^\ast.
\]
Therefore the constructed neuron produces $\kappa_i^\ast=\phi_i(z_i)$, matching the QANN computation at node $i$.
Induction over the topological order yields coincidence of the SNN terminal decoded output with the QANN output.
\end{proof}




\end{document}